\newtheorem{definition}{\textbf{Definition}}
\newtheorem{theorem}{\textbf{Theorem}}
\newtheorem{proposition}{\textbf{Proposition}}
\newtheorem{corollary}{\textbf{Corollary}}
\title{Estimating Transfer Entropy via Copula Entropy}
\author{Jian MA\thanks{Email: majian@hitachi.cn}}
\affil{Hitachi (China) Research \& Development Corporation}
\begin{document}

\maketitle

\begin{abstract}
	\noindent
	Causal discovery is a fundamental problem in statistics and has wide applications in different fields. Transfer Entropy (TE) is a important notion defined for measuring causality, which is essentially conditional Mutual Information (MI). Copula Entropy (CE) is a theory on measurement of statistical independence and is equivalent to MI. In this paper, we prove that TE can be represented with only CE and then propose a non-parametric method for estimating TE via CE. The proposed method was applied to analyze the Beijing PM2.5 data in the experiments. Experimental results show that the proposed method can infer causality relationships from data effectively and hence help to understand the data better. 
\end{abstract}
{\bf Keywords:} {Copula Entropy; Transfer Entropy; Conditional Independence; Causal Discovery; Estimation}

\section{Introduction}
Causality is about the relationship between cause and effect and is ubiquitous in natural and social worlds. Questing such relationship is the central topic of different sciences. Causal discovery \cite{holland1986statistics} is the statistical problem to identify such causal relations from observational or experimental data collected from the underlying systems. In statistics, association is closely related to causality. However, the former does not imply the latter, as is well known that correlation does not mean causation. To discover causality relationships, conceptual tools beyond association are needed.

Granger \cite{granger1969investigating,granger1980testing} developed the notion of causality between two stationary time series based on the philosophy that cause should improve the prediction of effects. Given two random variable $X_t, Y_t, t = 1,\ldots,T$, Granger Causality (GC) is defined as follows:
\begin{definition}[Granger Causality]
	\begin{equation}
	\label{eq:gc}
		\delta^2(Y_{t+1}|Y_t,X_t) < \delta^2(Y_{t+1}|Y_t),
	\end{equation}
	if $X$ causes $Y$, where $\delta^2$ denotes variance.
\end{definition}

Transfer Entropy (TE) is another information theoretic measure of causality defined for stationary time series by Schreiber \cite{schreiber2000measuring}. Inspired by the notion of Mutual Information (MI), TE is defined in the form of conditional MI, as follows:
\begin{definition}[Transfer Entropy]
	TE between time series $X_i,Y_i, i=1,\ldots,T$ is 
	\begin{equation}
	\label{eq:te}
		TE = \sum{p(Y_{i+1},Y^i,X_i)\log{\frac{p(Y_{i+1}|Y^i,X_i)}{p(Y_{i+1}|Y^i)}}},
	\end{equation}
	where $Y^i=(Y_1,\ldots,Y_i)$.
\end{definition}
\noindent
The above definition can be written as condition MI, as follows:
\begin{equation}
	TE = I(Y_{i+1};X_i|Y^i).
\end{equation}
\noindent
By such definition, TE is a model-free measure and can be interpreted as the information cause provides to effects. TE assumes time series to be stationary. Additionally, since conditional probability are used in the definition, it means TE also assumes Markovianity. Though having great potential applications to many fields, TE has been considered notoriously difficult to estimate \cite{zhang2011uai}. 

Both GC and TE are defined based on the same philosophy of causality. Barnett et al. have shown that GC and TE are equivalent under the Gaussianity assumption \cite{barnett2009granger}. Apparently, TE can be applied to more broad cases than GC. Directed Information is a notion closely related to TE, which is defined as the sum of a group of TE and one MI \cite{massey1990causality}. 

Copula theory is about the representation of statistical dependence \cite{nelsen2007introduction,joe2014dependence}. Copula Entropy (CE) is a recently introduced theory on measurement of statistical independence \cite{ma2011mutual}. CE has been proved to be equivalent to MI \cite{ma2011mutual} and therefore connects copula theory with information theory. It is a ideal measure of statistical independence since it has several good properties, such as multivariate, symmetric, non-negative (0 iff independent), invariant to monotonic transformation, equivalent to correlation coefficient in Gaussian cases. In \cite{ma2011mutual}, Ma and Sun also proposed a non-parametric method for estimating CE (See Section \ref{s:est}). CE has been applied to discover associations in data \cite{ma2019discovering}. 

In this paper, we propose a non-parametric method for estimating TE with CE. Specifically, we first prove that TE can also be represented with only CE and hence propose a method for estimating TE via CE based on this representation. We test the proposed method on the Beijing PM2.5 data to identify the causality in environmental and meteorological systems and also compare it with the related methods.

\section{Related Research}
Estimating TE is a fundamental problem for its applications. One of the basic estimation methods is based on the definition of TE. Faes et al. \cite{faes2013compensated} used the original definition of TE to estimate it as the difference between two conditional entropies. Vicente et al. \cite{vicente2011transfer} proposed to estimate TE by expanding the definition of TE into the sum of four entropies. Kontoyiannis and Skoularidou \cite{kontoyiannis2016estimating} proposed to estimate directed information via likelihood based MI estimator and showed the asymptotic convergence of such estimator theoretically.

Copula has been applied to estimate measures of causality. Taoufik et al. represented the hypothesis of Conditional Independence (CI) in terms of copula densities and then defined the test statistic based on the Hellinger distance between two terms of copula densities \cite{bouezmarni2012nonparametric}. Hu and Liang proposed to compute GC with copula, which write the definition of GC into a conditional copula density formation and then estimate conditional copula density empirically \cite{hu2014copula}. This work proposed to estimate GC with non-parametric PDF approximation, which is lack of convergence guarantee. Song \cite{song2009testing} proposed to test the hypothesis of CI after deriving conditional copula with Rosenblatt transforms. No\"el et al. \cite{veraverbeke2011estimation} proposed the estimators of conditional association measures with the estimated conditional copula. The above three research are all testing CI based on conditional copula. However, estimating conditional copula from data is usually biased and unstable. Since TE is essentially a conditional MI, it is natural to link it to CE. Wieczorek and Roth \cite{wieczorek2016causal} proposed the notion of causal compression with CE, which proved that Directed Information can be represented with only CE. Kim et al. proposed a Copula Nonlinear GC with a VAR system by using a copula version of beta regression model \cite{kim2020copula}. Cui et al. considered learning causal structure from data with missing values with Gaussian copula assumption \cite{cui2019learning}. Reddi and P\'oczos \cite{reddi2013scale} proposed a scale invariant conditional dependence measure by transforming Hilbert-Schmidt dependence measures with copula functions. Testing CI through partial copula was a popular topic recently. Bergsma \cite{bergsma2004testing,bergsma2010nonparametric} first proposed to test CI by testing independence between the variables derived from original variable by partial copula transformation. Bianchi, et al. \cite{bianchi2020conditional} introduced a test statistic for CI called weighted partial copula which is defined as the weighted distance between the estimated conditional (or partial) copula and independence copula. Also with partial copula, Petersen and Hansen \cite{petersen2020testing} proposed a test statistics based on generalized correlation between the residuals estimated with quantile regression. Frattarolo and Guegan \cite{frattarolo2013empirical} proposed to test CI with empirical conditional copula projections.

Another line of the  related research is on estimating Conditional MI (CMI) based on the kNN method for estimating MI \cite{frenzel2007partial,vejmelka2008inferring,poczos2012nonparametric,runge2018conditional}. TE is essentially CMI. These related works on estimating CMI are all based on a same idea that expands CMI into four terms of entropies by definition and then estimates each term with kNN entropy estimator. In this paper, we will proposed a different kNN-based method for estimating TE (or CMI). 

In this paper, we will proposed a method for estimating TE via CE based on the representation TE with only CE. Together with the previous work \cite{ma2011mutual}, we will develop a theoretical framework on testing both independence and CI based on CE. Previously, two similar frameworks for (conditional) independence testing were also proposed based on kernel tricks in machine learning \cite{gretton2007nips,zhang2011uai} and distance covariance/correlation \cite{szekely2007measuring,szekely2009brownian,wang2015conditional}. Both frameworks can be considered as nonlinear generalization of traditional (partial) correlation, and have non-parametric estimation methods. The kernel-base framework is based on the idea, called kernel mean embedding, that test correlation \cite{gretton2007nips} or partial correlation \cite{zhang2011uai} by transforming distributions into RKHS with kernel functions. Another framework defines a concept called distance correlation with characteristic function \cite{szekely2007measuring,szekely2009brownian}. With this concept, Wang et al. \cite{wang2015conditional} proposed a new concept for testing CI, called Conditional Distance Correlation (CDC), defined with characteristic function for conditional functions.

\section{Copula Entropy}
\label{s:CopEnt}
\subsection{Theory}
Copula theory is about the representation of multivariate dependence with copula function \cite{nelsen2007introduction,joe2014dependence}. At the core of copula theory is Sklar theorem \cite{sklar1959fonctions} which states that multivariate probability density function can be represented as a product of its marginals and copula density function which represents dependence structure among random variables. Such representation separates dependence structure, i.e., copula function, with the properties of individual variables -- marginals, which make it possible to deal with dependence structure only regardless of joint distribution and marginal distributions. This section is to define an statistical independence measure with copula. For clarity, please refer to \cite{ma2011mutual} for notations.

With copula density, Copula Entropy is define as follows \cite{ma2011mutual}:
\begin{definition}[Copula Entropy]
\label{d:ce}
	Let $\mathbf{X}$ be random variables with marginal distributions $\mathbf{u}$ and copula density $c(\mathbf{u})$. CE of $\mathbf{X}$ is defined as
	\begin{equation}
	H_c(\mathbf{X})=-\int_{\mathbf{u}}{c(\mathbf{u})\log{c(\mathbf{u})}}d\mathbf{u}.
	\end{equation}
\end{definition}

In information theory, MI and entropy are two different concepts \cite{cover1999elements}. In \cite{ma2011mutual}, Ma and Sun proved that they are essentially same -- MI is also a kind of entropy, negative CE, which is stated as follows: 
\begin{theorem}
\label{thm1}
	MI of random variables is equivalent to negative CE:
	\begin{equation}
	I(\mathbf{X})=-H_c(\mathbf{X}).
	\end{equation}
\end{theorem}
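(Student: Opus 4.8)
The plan is to start from the integral definition of the (multivariate) mutual information of $\mathbf{X}=(X_1,\ldots,X_n)$, namely
\begin{equation}
I(\mathbf{X})=\int p(\mathbf{x})\log\frac{p(\mathbf{x})}{\prod_{i}p_i(x_i)}\,d\mathbf{x},
\end{equation}
where $p$ is the joint density and the $p_i$ are the marginal densities, and to rewrite the integrand using Sklar's theorem. First I would invoke Sklar's theorem in its density form, which factorizes the joint density as $p(\mathbf{x})=c(\mathbf{u})\prod_{i}p_i(x_i)$ with $u_i=F_i(x_i)$ the marginal CDFs and $c$ the copula density. Substituting this factorization, the product of marginals cancels inside the argument of the logarithm, so the ratio collapses to the copula density alone and
\begin{equation}
I(\mathbf{X})=\int p(\mathbf{x})\log c(\mathbf{u})\,d\mathbf{x}.
\end{equation}

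The second step is the probability integral transform $u_i=F_i(x_i)$, which is the crux of the argument. Under this change of variables the Jacobian is exactly $\prod_{i}p_i(x_i)$, so that $d\mathbf{u}=\prod_{i}p_i(x_i)\,d\mathbf{x}$, and the weight simplifies as $p(\mathbf{x})\,d\mathbf{x}=c(\mathbf{u})\prod_{i}p_i(x_i)\,d\mathbf{x}=c(\mathbf{u})\,d\mathbf{u}$. Pushing the integral forward onto the unit cube therefore yields
\begin{equation}
I(\mathbf{X})=\int_{\mathbf{u}} c(\mathbf{u})\log c(\mathbf{u})\,d\mathbf{u}=-H_c(\mathbf{X}),
\end{equation}
which is exactly the claimed identity by Definition~\ref{d:ce}.

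The computation itself is short; the care needed --- and the main thing to get right --- lies in the change of variables and the regularity assumptions that license it. Specifically, I would assume the marginals $F_i$ are continuous and strictly increasing, so that the map $\mathbf{x}\mapsto\mathbf{u}$ is a bijection onto the interior of the unit cube with the stated Jacobian, and that the copula density $c$ exists (i.e. the joint distribution is absolutely continuous) so that both $H_c$ and the integrals above are well defined. Under these standard hypotheses the factorization and the transform are exact and no approximation enters, so the equivalence of MI and negative CE is an algebraic identity rather than a bound. I expect the only subtlety to be bookkeeping at the boundary of the cube and the integrability of $\log c$, both of which are controlled by the same assumptions that make CE itself finite.
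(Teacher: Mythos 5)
Your proof is correct and is exactly the standard argument the paper alludes to (the paper itself gives no proof of Theorem~\ref{thm1}, deferring to the cited reference): substitute Sklar's density factorization $p(\mathbf{x})=c(\mathbf{u})\prod_i p_i(x_i)$ into the multi-information integral so the marginals cancel in the logarithm, then push the integral onto the unit cube via the probability integral transform. Your explicit attention to the Jacobian and the regularity assumptions (continuous, strictly increasing marginals and existence of the copula density) is a welcome bit of care that the paper glosses over.
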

\noindent
The proof of Theorem \ref{thm1} is simple \cite{ma2011mutual}. There is also an instant corollary (Corollary \ref{c:ce}) on the relationship between information of joint probability density function, marginal density function and copula density function, which is stated as follows:
\begin{corollary}
\label{c:ce}
	\begin{equation}
		H(\mathbf{X})=\sum_{i}{H(X_i)}+H_c(\mathbf{X}).
	\end{equation}
\end{corollary}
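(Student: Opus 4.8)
The plan is to derive the identity directly from Theorem~\ref{thm1}, treating the corollary exactly as its name suggests. First I would recall the definition of the multivariate mutual information as the total correlation,
\begin{equation}
I(\mathbf{X}) = \sum_i H(X_i) - H(\mathbf{X}),
\end{equation}
i.e. the gap between the sum of the marginal entropies and the joint entropy, which quantifies the redundancy among the components of $\mathbf{X}$. Substituting the equivalence $I(\mathbf{X}) = -H_c(\mathbf{X})$ from Theorem~\ref{thm1} and rearranging yields the claim in one line. This route is the cleanest because it reuses the already-established theorem and requires no further integration.

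For completeness I would also record a self-contained derivation from Sklar's theorem, which is the computation underlying Theorem~\ref{thm1} itself. Writing the joint density as $p(\mathbf{x}) = c(\mathbf{u}) \prod_i p_i(x_i)$ with $u_i = F_i(x_i)$, I would split the joint entropy as
\begin{equation}
H(\mathbf{X}) = -\int p(\mathbf{x}) \log c(\mathbf{u})\, d\mathbf{x} - \sum_i \int p(\mathbf{x}) \log p_i(x_i)\, d\mathbf{x}.
\end{equation}
Marginalizing the $i$th summand over the remaining coordinates collapses it to $H(X_i)$, while the change of variables $u_i = F_i(x_i)$, under which $p(\mathbf{x})\, d\mathbf{x} = c(\mathbf{u})\, d\mathbf{u}$, turns the first term into $-\int c(\mathbf{u}) \log c(\mathbf{u})\, d\mathbf{u} = H_c(\mathbf{X})$. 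Collecting the pieces gives the stated decomposition.

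The only genuinely delicate point is bookkeeping rather than depth: I must ensure the multivariate mutual information $I(\mathbf{X})$ appearing in Theorem~\ref{thm1} is read as the total correlation $\sum_i H(X_i) - H(\mathbf{X})$ and not as a pairwise quantity, since the decomposition fails under the pairwise interpretation. In the direct derivation, the corresponding care lies in justifying the change of variables and the existence of the relevant densities and integrals (absolute continuity of the marginals), but these are the same regularity assumptions already implicit in Definition~\ref{d:ce}. Hence I expect no real obstacle; the entire content of the corollary is carried by Theorem~\ref{thm1}.
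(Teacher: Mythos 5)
Your first route is exactly the paper's (implicit) argument: the paper presents this as an ``instant corollary'' of Theorem~\ref{thm1}, obtained by combining $I(\mathbf{X})=\sum_i H(X_i)-H(\mathbf{X})$ with $I(\mathbf{X})=-H_c(\mathbf{X})$ and rearranging. Your proposal is correct, and the supplementary Sklar-based derivation is a valid bonus that merely unpacks Theorem~\ref{thm1} itself.
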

The above results cast insight into the relationship between entropy, MI, and copula through CE, and therefore build a bridge between information theory and copula theory. CE itself provides a mathematical theory of statistical independence measure.

\subsection{Estimation}
\label{s:est}
It has been widely considered that estimating MI is notoriously difficult. Under the blessing of Theorem \ref{thm1}, Ma and Sun \cite{ma2011mutual} proposed a simple and elegant non-parametric method for estimating CE (MI) from data which composes of only two steps:\footnote{The \textbf{copent} packages in \texttt{R} and \texttt{Python} are available on CRAN and PyPI, and also on GitHub at \url{http://github.com/majianthu/copent}.}
\begin{enumerate}
	\item Estimating Empirical Copula Density (ECD);
	\item Estimating CE.
\end{enumerate}

For Step 1, if given data samples $\{\mathbf{x}_1,\ldots,\mathbf{x}_T\}$ i.i.d. generated from random variables $\mathbf{X}=\{x_1,\ldots,x_N\}^T$, one can easily estimate ECD as follows:
\begin{equation}
F_i(x_i)=\frac{1}{T}\sum_{t=1}^{T}{\chi(\mathbf{x}_{t}^{i}\leq x_i)},
\end{equation}
where $i=1,\ldots,N$ and $\chi$ represents for indicator function. Let $\mathbf{u}=[F_1,\ldots,F_N]$, and then one can derive a new samples set $\{\mathbf{u}_1,\ldots,\mathbf{u}_T\}$ as data from ECD $c(\mathbf{u})$. In practice, Step 1 can be easily implemented non-parametrically with rank statistics.

Once ECD is estimated, Step 2 is essentially a problem of entropy estimation which has been contributed with many existing methods. Among them, the kNN method \cite{kraskov2004estimating} was suggested in \cite{ma2011mutual}. With rank statistics and kNN methods, one can derive a non-parametric method of estimating CE, which can be applied to any situation without assumption on the underlying system.

\section{TE via CE}
In this section, we propose a method for estimating TE via CE. Before that, we derive a representation of TE with CE.
\begin{proposition}
	TE can be represented with only CE.
\end{proposition}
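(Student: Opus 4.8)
The goal is to show that Transfer Entropy, defined as the conditional mutual information $I(Y_{i+1}; X_i \mid Y^i)$, can be written using only Copula Entropy terms. The natural starting point is the standard information-theoretic identity that expresses conditional mutual information as a difference of two mutual informations (or, equivalently, as a combination of unconditional entropies). The plan is to first reduce TE to ordinary MI terms, and then invoke Theorem \ref{thm1}, which states $I(\mathbf{X}) = -H_c(\mathbf{X})$, to replace every MI by a negative CE.

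**Key steps.**

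First I would recall the chain-rule identity for conditional mutual information:
\begin{equation}
I(Y_{i+1}; X_i \mid Y^i) = I(Y_{i+1}; X_i, Y^i) - I(Y_{i+1}; Y^i).
\end{equation}
This expresses the conditional MI (the conditioning on $Y^i$) purely in terms of two unconditional mutual informations, thereby removing the conditional probabilities from the picture. Second, I would apply Theorem \ref{thm1} to each term, converting each MI into the negative CE of the corresponding collection of random variables:
\begin{equation}
TE = I(Y_{i+1}; X_i \mid Y^i) = -H_c(Y_{i+1}, X_i, Y^i) + H_c(Y_{i+1}, Y^i).
\end{equation}
Here $H_c(Y_{i+1}, X_i, Y^i)$ denotes the CE of the joint collection of variables $(Y_{i+1}, X_i, Y^i)$, and $H_c(Y_{i+1}, Y^i)$ the CE of $(Y_{i+1}, Y^i)$. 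Since both terms are purely CE, this completes the desired representation of TE using only CE.

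**Where the care is needed.**

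The main subtlety — not a deep obstacle, but the point requiring attention — is justifying that $I(Y_{i+1}; X_i, Y^i)$ is a mutual information to which Theorem \ref{thm1} applies. Theorem \ref{thm1} equates the total correlation / mutual information $I(\mathbf{X})$ of a multivariate vector with $-H_c(\mathbf{X})$, so I must be careful about which grouping of variables plays the role of the argument of $I$. The cleanest route is to work directly from the entropy decomposition: writing $I(Y_{i+1}; X_i \mid Y^i)$ as a sum and difference of joint Shannon entropies $H(\cdot)$, and then applying Corollary \ref{c:ce}, namely $H(\mathbf{X}) = \sum_i H(X_i) + H_c(\mathbf{X})$, to each joint entropy. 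The individual marginal entropies $H(X_i)$ cancel in the alternating sum, leaving only the CE terms. I would verify this cancellation explicitly, since it is exactly the mechanism that makes the representation depend on CE alone, with no residual marginal-entropy contributions.
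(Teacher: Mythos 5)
Your overall strategy --- reduce the conditional MI to unconditional information quantities and then convert those to CE via Theorem \ref{thm1} and Corollary \ref{c:ce} --- is the same as the paper's. But the two-term formula you present as the completed representation is not correct, and the subtlety you flag at the end is precisely where it breaks. Under Definition \ref{d:ce}, $H_c(Y_{i+1},X_i,Y^i)$ is the entropy of the copula built from the \emph{univariate} marginals of all the variables involved, so by Theorem \ref{thm1} the quantity $-H_c(Y_{i+1},X_i,Y^i)$ is the total correlation of all those variables against one another, \emph{not} the two-block mutual information $I(Y_{i+1};(X_i,Y^i))$ that appears in the chain rule. The two differ by exactly the dependence internal to the block $(X_i,Y^i)$, namely by $I(X_i;Y^i)=-H_c(Y^i,X_i)+H_c(Y^i)$. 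Consequently your displayed identity
\begin{equation*}
TE = -H_c(Y_{i+1},X_i,Y^i) + H_c(Y_{i+1},Y^i)
\end{equation*}
overstates TE by $I(X_i;Y^i)\ge 0$ and is wrong whenever the cause $X_i$ is dependent on the past of the effect $Y^i$ --- which is the typical situation in causal discovery.

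The repair is exactly the route you sketch but do not carry out: write $I(Y_{i+1};X_i\mid Y^i)=H(Y_{i+1},Y^i)+H(Y^i,X_i)-H(Y_{i+1},Y^i,X_i)-H(Y^i)$ and apply Corollary \ref{c:ce} to each joint entropy; the marginal entropies cancel in the alternating sum and you obtain the paper's four-term representation \eqref{eq:three},
\begin{equation*}
TE = -H_c(Y_{i+1},Y^i,X_i) + H_c(Y_{i+1},Y^i) + H_c(Y^i,X_i) - H_c(Y^i),
\end{equation*}
with the last term vanishing when $Y^i=Y_i$ is univariate. Had you executed that cancellation explicitly you would have found the two extra terms yourself; as written, the proof stops one step short and records an incorrect final formula.
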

\begin{proof}
The proof starts from the definition of TE \eqref{eq:te}. After expanding the definition equation, the definition can be easily transformed into the equation composed of four different CE terms.
\begin{align}
	TE &= \sum{p(Y_{i+1},Y^i,X_i)\log{\frac{p(Y_{i+1}|Y^i,X_i)}{p(Y_{i+1}|Y^i)}}}\\
	&=\sum{p(Y_{i+1},Y^i,X_i)\log{\frac{p(Y_{i+1},Y^i,X_i)p(Y^i)}{p(Y_{i+1},Y^i)p(Y^i,X_i)}}}\\
	&=I(Y_{i+1};Y^i;X_i) - I(Y_{i+1};Y^i) - I(Y^i;X_i)\\
	&=-H_c(Y_{i+1};Y^i;X_i) + H_c(Y_{i+1};Y^i) + H_c(Y^i;X_i) \label{eq:three1}\\
	&=-H_c(Y_{i+1},Y^i,X_i) + H_c(Y_{i+1},Y^i) + H_c(Y^i,X_i) - H_c(Y^i). \label{eq:three}
\end{align}
The last term $H_c(Y^i)$ in \eqref{eq:three} equals to 0 if $Y^i=Y_i$.
\end{proof}
The above proposition shows that TE is the sum of four terms: joint CE of cause $X$ and the past and future of effect $Y$, self-joint CE of $Y$, association between cause $X$ and effects $Y$, and joint CE of the past of $Y$ (exists only if it is multivariate). The second term means excluding the information of self dynamics of effects from joint CE and the third term means excluding the information from cause to the past of effects.

With this representation, we propose a method for estimating TE via CE by two simple steps: 
\begin{enumerate}
	\item estimating the three or four CE terms in \eqref{eq:three}; 
	\item calculating TE from these estimated CEs. 
\end{enumerate}
CE can be estimated with the method in Section \ref{s:est}, and hence we derive a non-parametric method for estimating TE. The proposed method inherits the merits of the method for estimating CE, including model-free, tuning-free, good convergence performance and low computation burden.

\section{Experiments and Results}
\subsection{Data}
The Data used in our experiment is the Beijing PM2.5 dataset in the UCI machine learning repository \cite{asuncion2007uci}, which is about air pollution at Beijing. This hourly data set contains the PM2.5 data of US Embassy in Beijing. Meanwhile, meteorological data from Beijing Capital International Airport are also included. The data has been analyzed at month scale \cite{liang2015assessing,kreuzer2019bayesian}.

Meteorological factors in data include dew point, temperature, pressure, cumulated wind speed, combined wind direction, cumulated hours of snow, cumulated hours of rain. The first four factors are analyzed in our experiments. The data was collected hourly from Jan. 1st, 2010 to Dec. 31st, 2014, which results in 43824 samples with missing values. To avoid tackling missing values, only the data from April 2nd, 2010 to May 14th, 2010 were used in our experiments, which contains 1000 samples without missing values. 

In our experiments, we analyze the causal relationship between meteorological factors and PM2.5 at hour scale. Studying such relationship can help to understand the underlying mechanism of pollution generation and to build the forecasting model of PM2.5.

\subsection{Experiments}
\footnote{The code in \texttt{R} and \texttt{Python} for the experiments is available on GitHub at \url{https://github.com/majianthu/transferentropy}.}
In the experiments, we estimated TE between meteorological factors and PM2.5 to measure how the former affect the latter after several hours lag. In the experiments, for \eqref{eq:three}, two cases are considered: $Y^i=Y_i$ and $Y^i=(Y_i,Y_{i-1},Y_{i-2},Y_{i-3})$, which means the past one and four states are conditioned. The latter case is for testing the Markovianity of time series data. The time lags in the experiments is from 1 to 24 hours. To investigate the relationship between three CE terms in TE, we also estimated them from data respectively. To investigate the dynamic relationship between meteorological factors, we estimated TE between four pairs of factors: temperature to pressure and dew point, and cumulated wind speed to temperature and pressure. TE is estimated with the proposed non-parametric method. The non-parametric method for estimating CE is implemented in the \texttt{R} package '\textsf{copent}' on CRAN \cite{ma2020copent}. 

We also conducted an experiment to compare our method with two other methods on testing CI: kernel-based CI (KCI) \cite{zhang2011uai} and CDC \cite{wang2015conditional}. The three methods were compared on inferring causal relationships between meteorological factors and PM2.5 from data. The \texttt{R} packages '\textsf{CondIndTests}' and '\textsf{cdcsis}' are used in the experiments as the implementations of KCI and CDC respectively.

\subsection{Results}
The estimated TE of four factors are shown in Figure \ref{f:te}. It can be learned from the Figure that TE of the four meteorological factor increase sharply in the first 9 hours time lags and that TE of temperature and cumulated wind speed reach to their peak value at 9 hours time lags while TE of the other two factors still increase but with relatively slow rate. Generally speaking, the trends of TE of dew point and pressure are similar and that of the other two factors are similar.

\begin{figure}
	\centering
	\subfigure[Dew Point]{\includegraphics[width=0.45\textwidth]{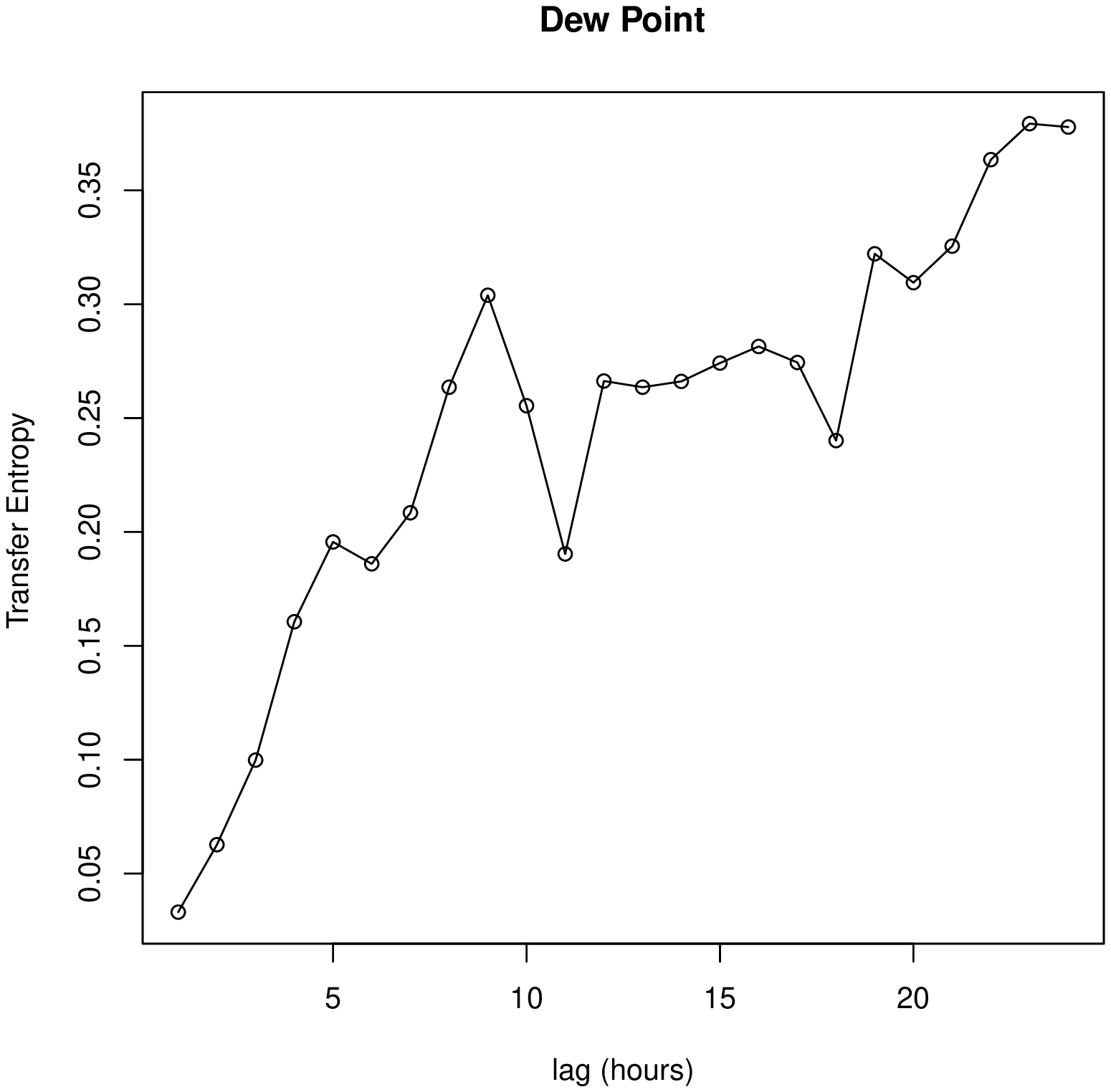}}	
	\subfigure[Temperature]{\includegraphics[width=0.45\textwidth]{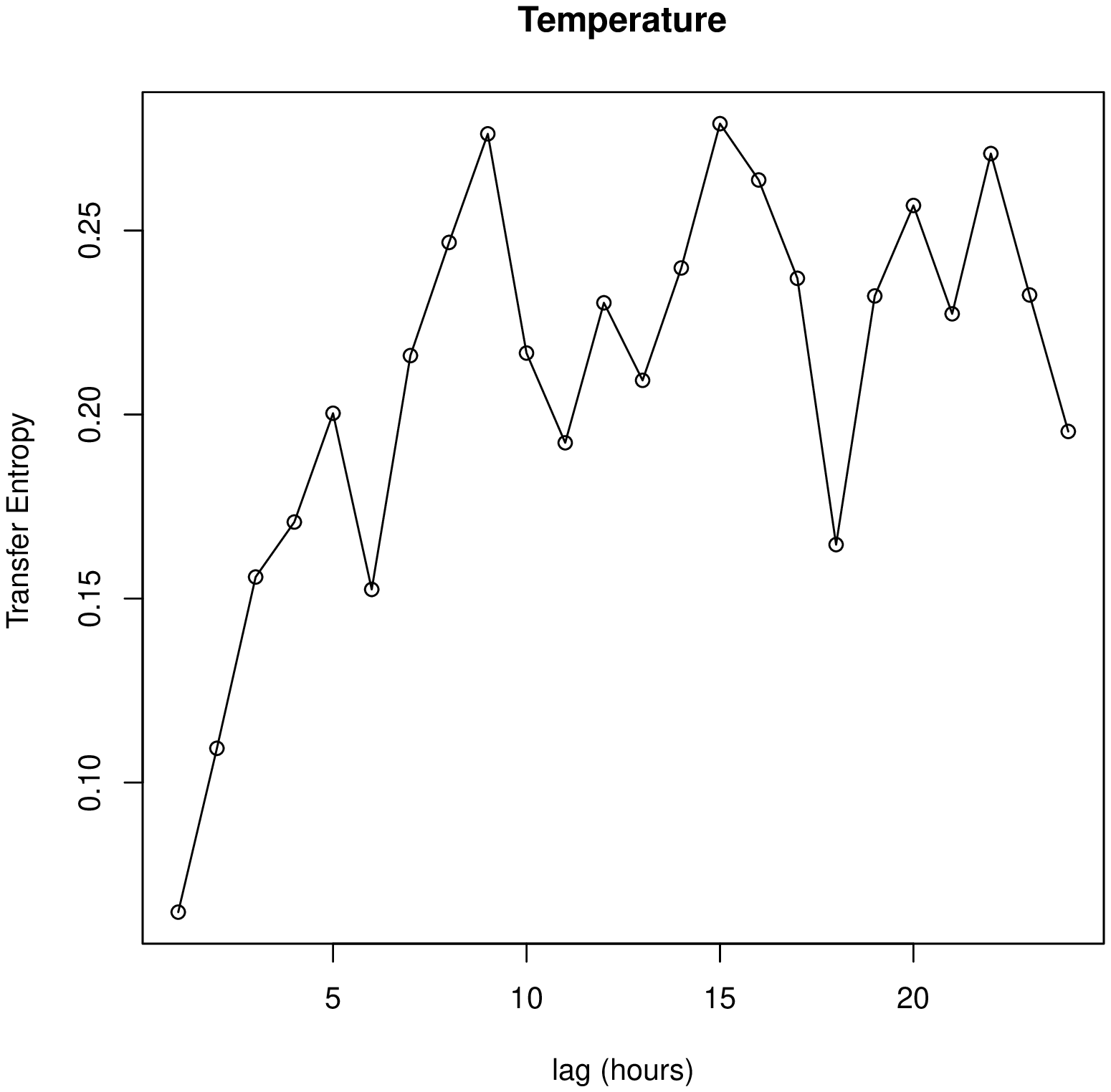}}
	\subfigure[Pressure]{\includegraphics[width=0.45\textwidth]{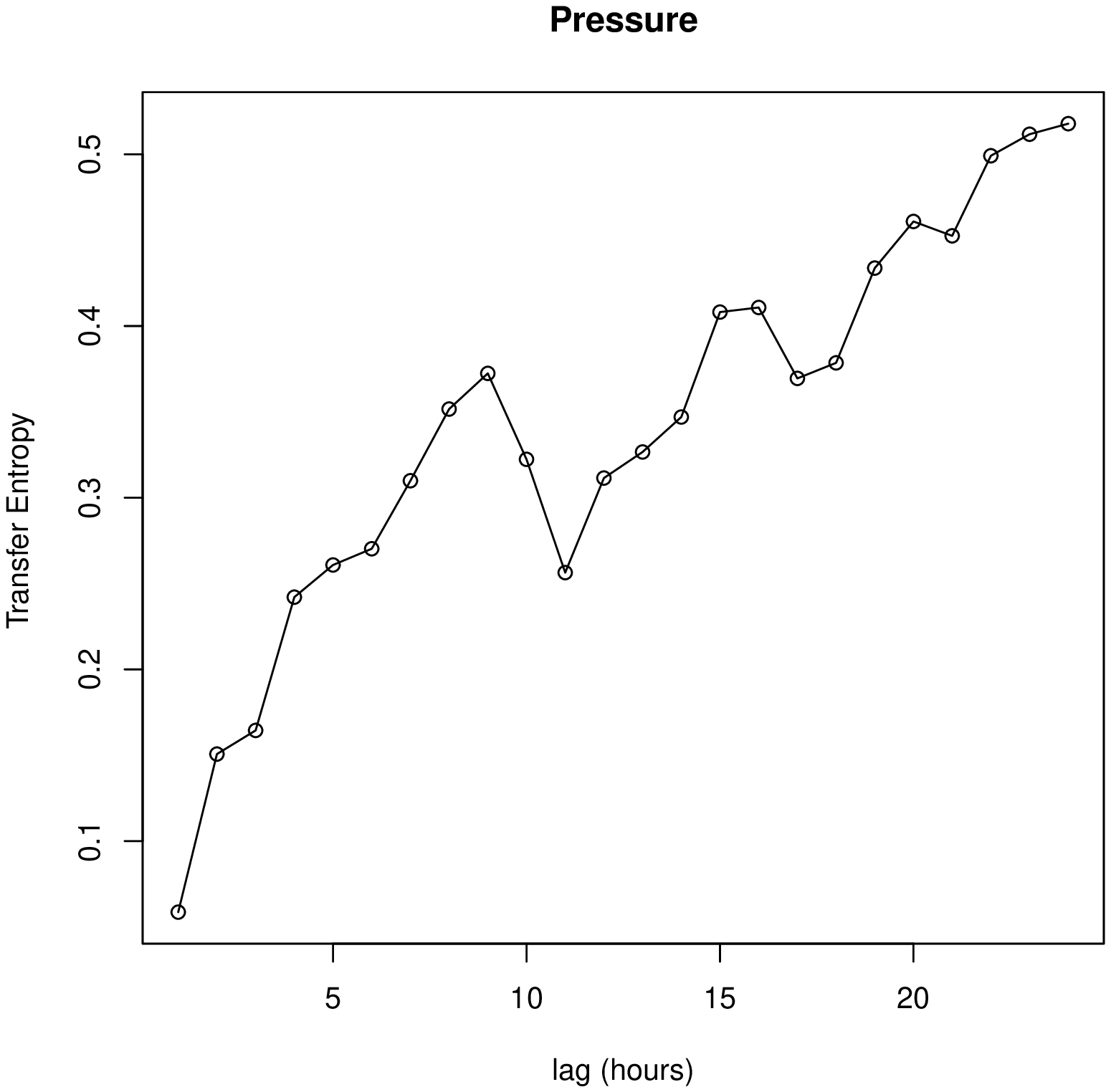}}
	\subfigure[Cumulated Wind Speed]{\label{f:ted}\includegraphics[width=0.45\textwidth]{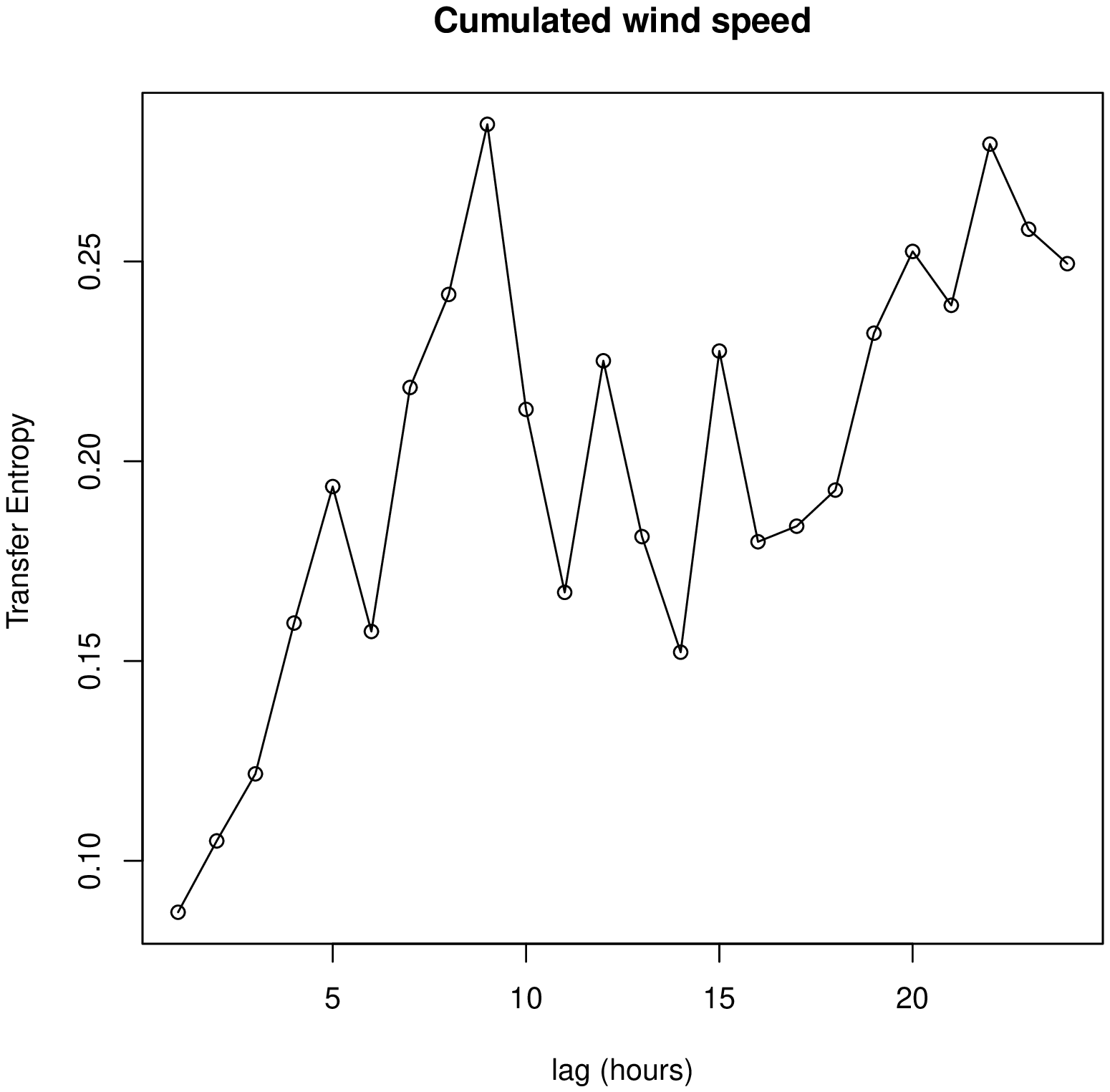}}	
	\caption{TE of different time lags of meteorological factors on PM2.5.}
	\label{f:te}
\end{figure}

The three CE factors of TE of temperature on PM2.5 are illustrated in Figure \ref{f:factors}. It can be learned that association strength measured by CE does not increase as causality strength measured by TE increases and that the increasing trend of TE is mostly contributed by the difference between joint CE and self joint CE.

\begin{figure}
	\centering
	\subfigure[TE]{\includegraphics[width=0.45\textwidth]{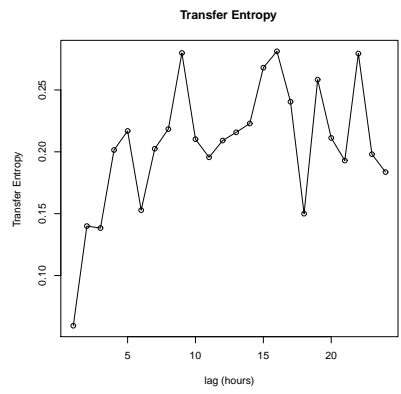}}
	\subfigure[Joint CE]{\includegraphics[width=0.45\textwidth]{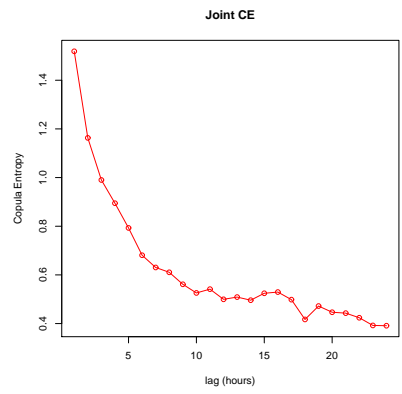}}
	\subfigure[Association]{\includegraphics[width=0.45\textwidth]{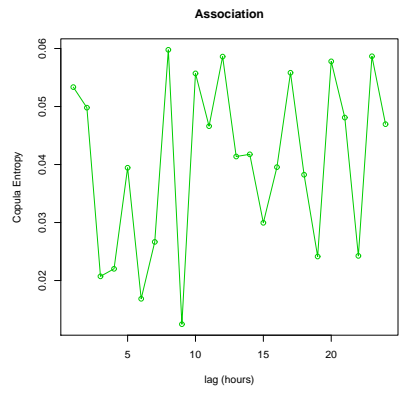}}
	\subfigure[Self Joint CE]{\includegraphics[width=0.45\textwidth]{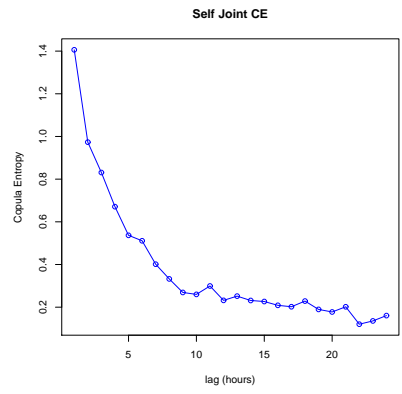}}	
	\caption{CE Factors of TE of temperature factor.}
	\label{f:factors}
\end{figure}

TE between meteorological factors are shown in Figure \ref{f:mfactors}. It can be learned that the influence of temperature on pressure and dew point increase with time and takes more than 10 hours lag to reach its peak and that the influence of cumulated wind speed on temperature and pressure increases very quickly in the first 4-5 hours lag.

\begin{figure}
	\centering
	\subfigure[Temperature to Pressure]{\includegraphics[width=0.45\textwidth]{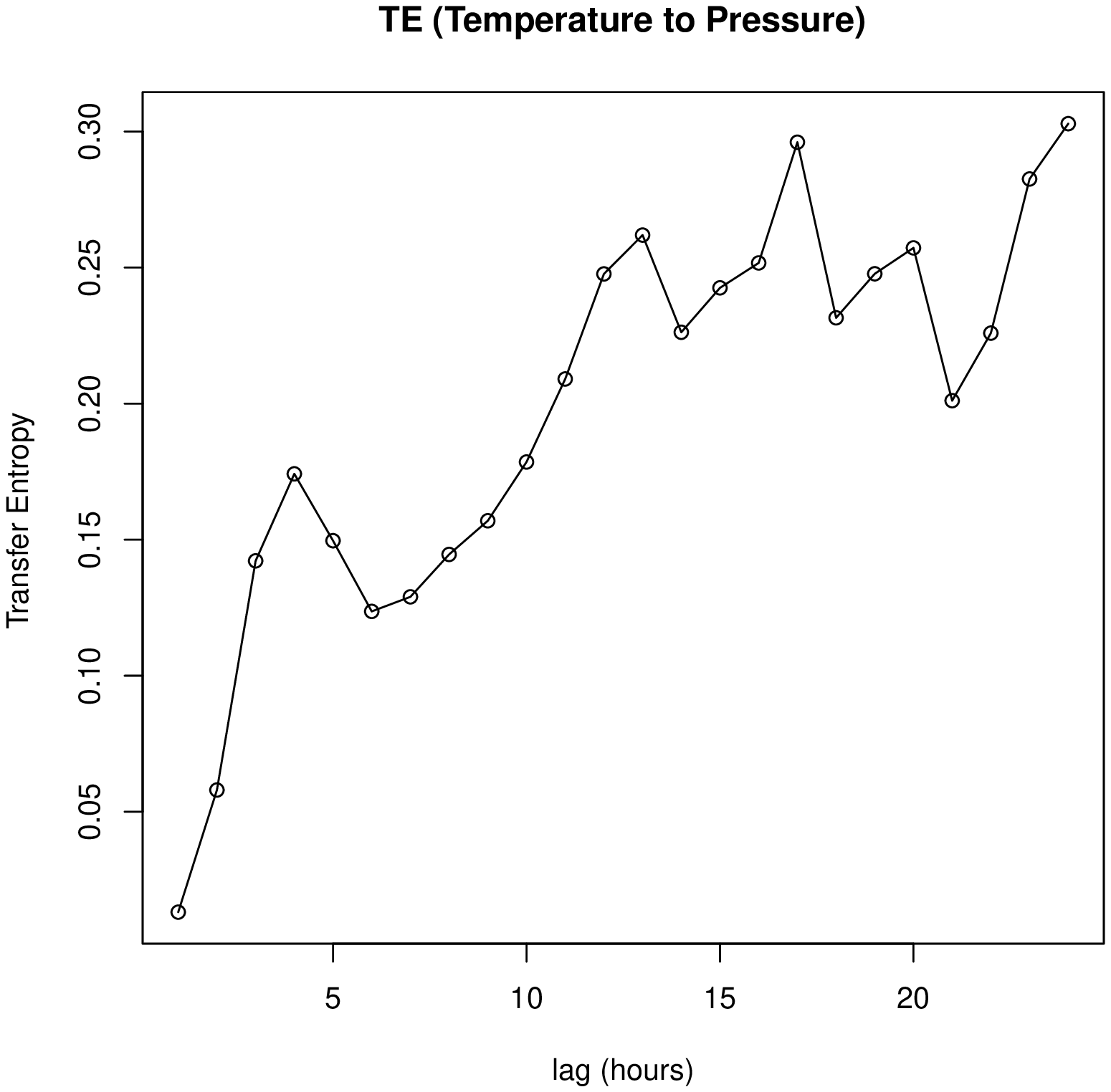}}
	\subfigure[Temperature to Dew Point]{\includegraphics[width=0.45\textwidth]{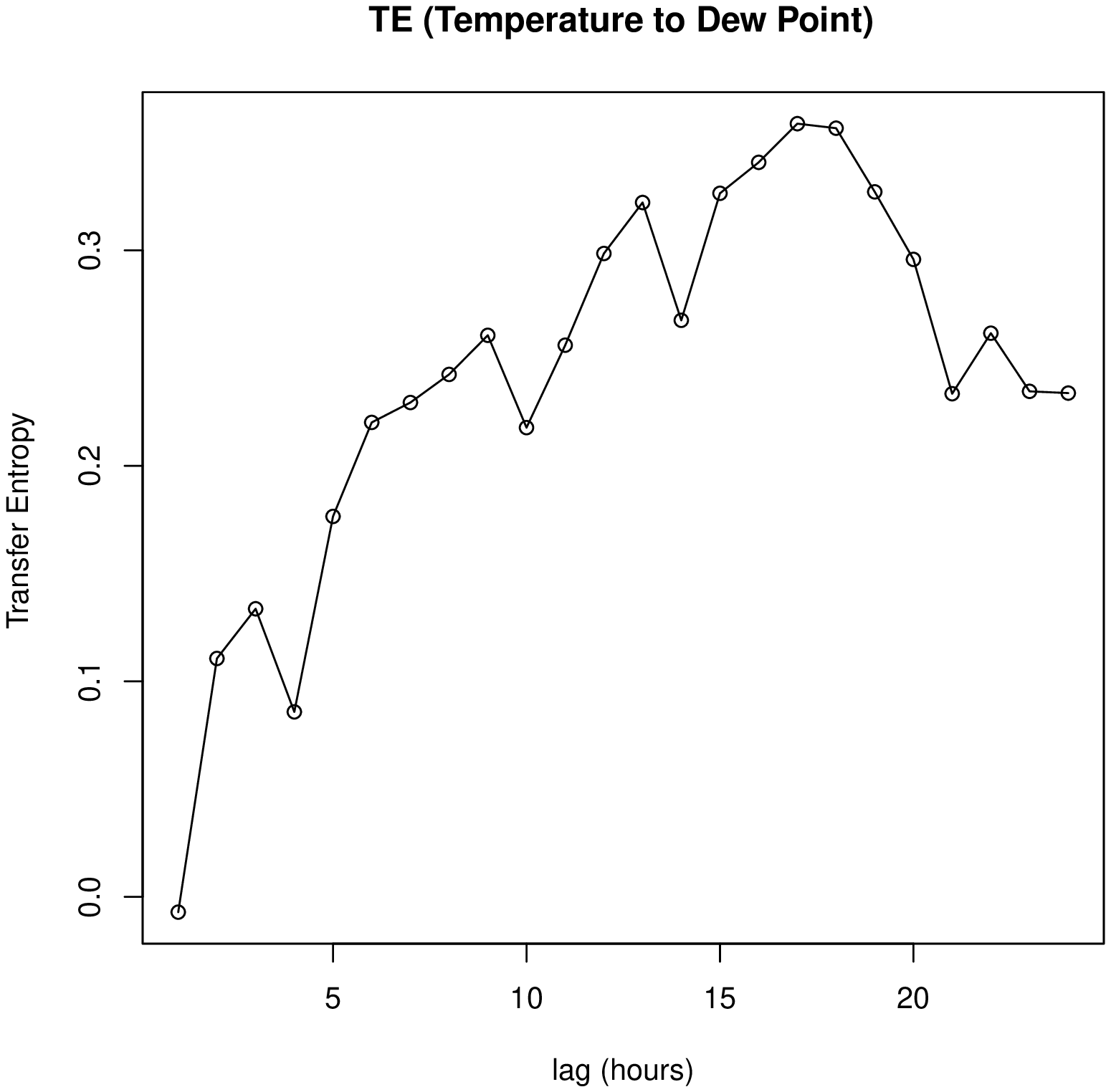}}
	\subfigure[Wind to Temperature]{\label{f:mfactorsc}\includegraphics[width=0.45\textwidth]{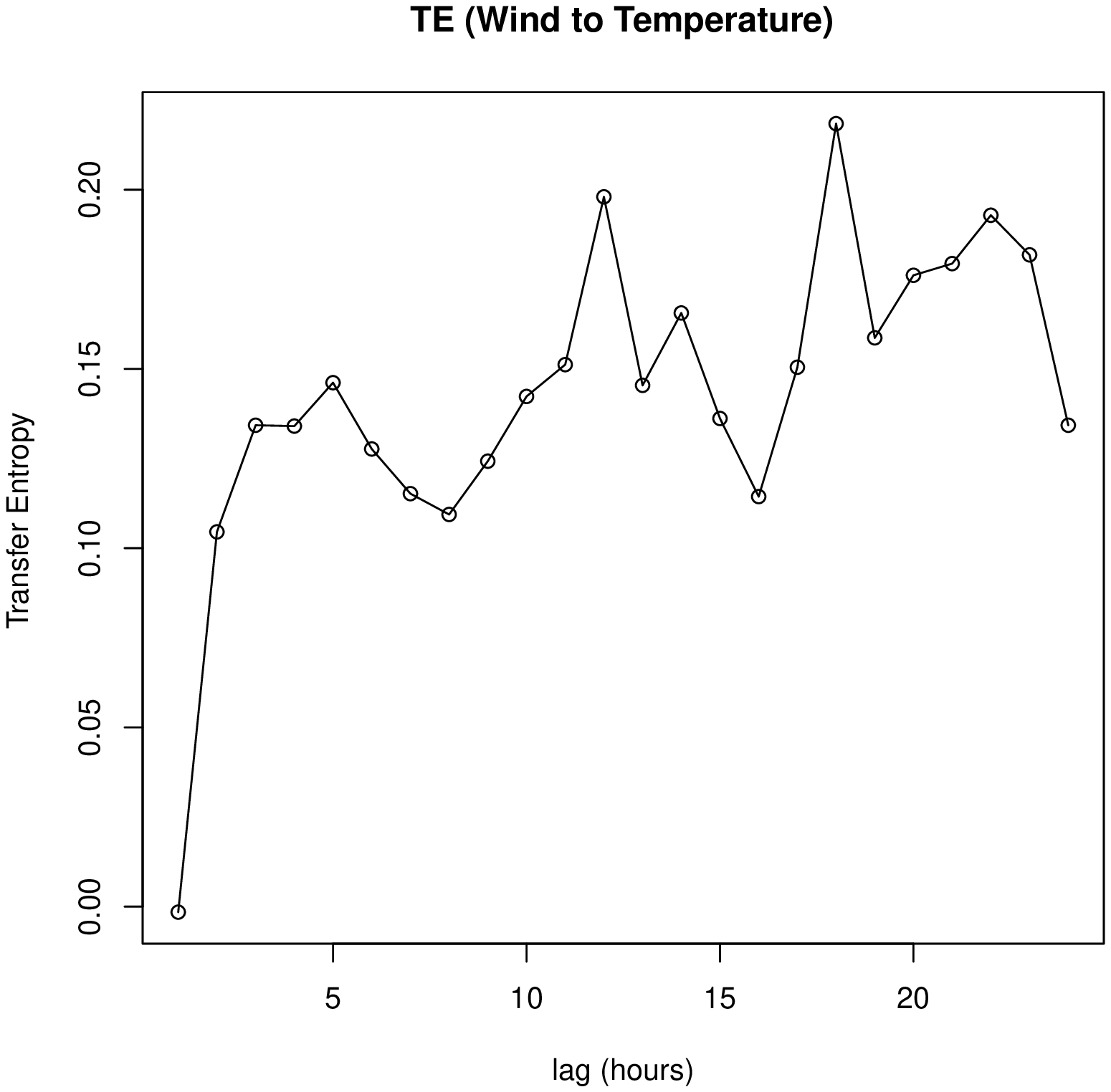}}
	\subfigure[Wind to Pressure]{\label{f:mfactorsd}\includegraphics[width=0.45\textwidth]{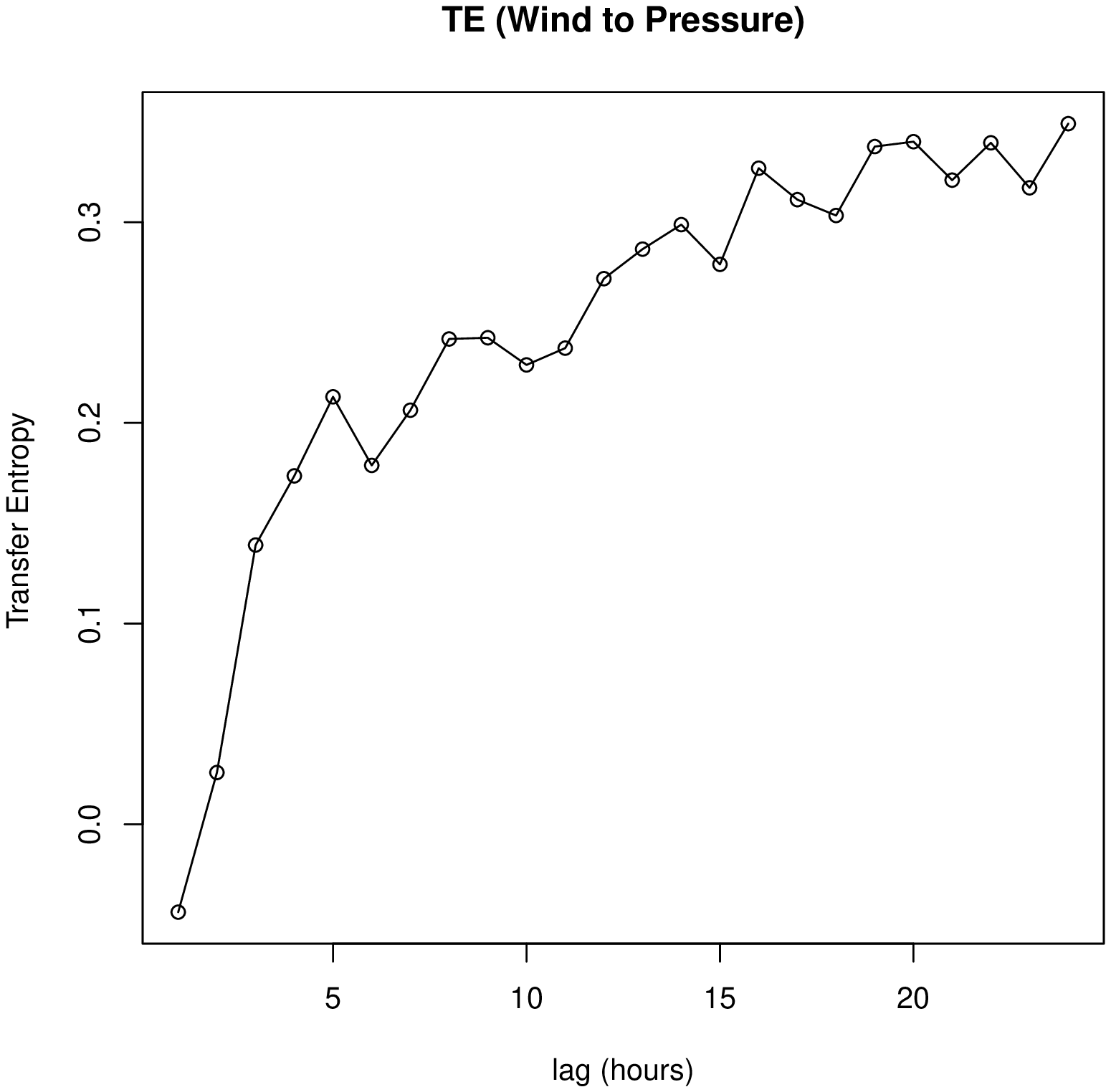}}	
	\caption{TE between meteorological factors.}
	\label{f:mfactors}
\end{figure}

The comparison between TE, KCI, and CDC on estimating causality from pressure to PM2.5 is shown in Figure \ref{f:3ci}. It can be learned that TE and CDC present similar results with an increasing phrase and a slow increasing phrase, while the result of KCI does not show such trend. 

\begin{figure}
	\centering
	\subfigure[TE via CE]{\includegraphics[width=0.78\textwidth]{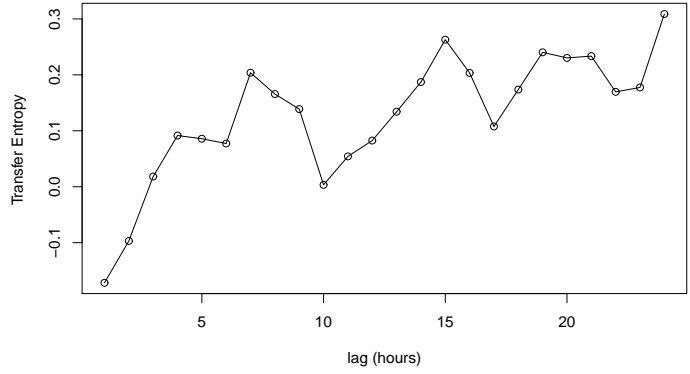}}
	\subfigure[CDC]{\includegraphics[width=0.78\textwidth]{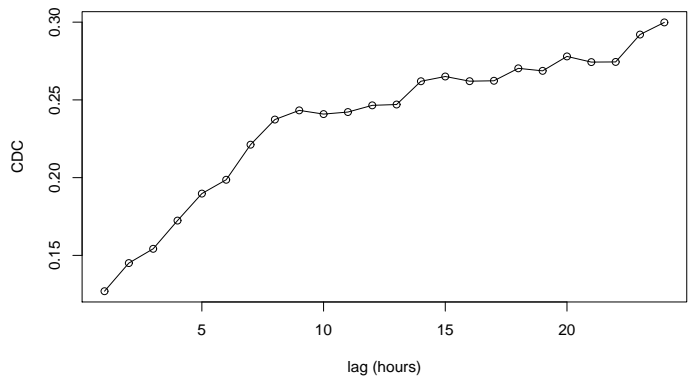}}
	\subfigure[KCI]{\includegraphics[width=0.78\textwidth]{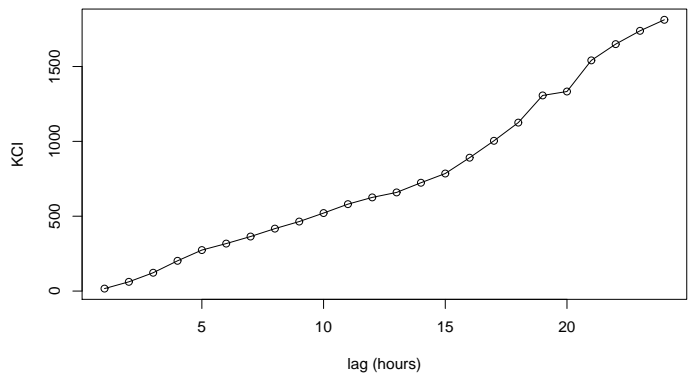}}
	\caption{Estimated TE, CDC, and KCI between pressure and PM2.5.}
	\label{f:3ci}
\end{figure}

\section{Discussion}
In this paper, we developed a theory of TE representation with only CE. It is essentially a theory of testing CI through CE. In the previous research \cite{ma2011mutual}, we have defined CE as a measurement of statistical independence. Therefore, a simple and elegant framework on testing both independence and CI based on only CE is proposed. Based on this theoretical framework, (conditional) independence can be tested by estimating CE. Since (conditional) independence is of fundamental importance in statistics \cite{dawid1979conditional}, such theoretical framework and estimation method will have wide applications in the related fields. 

Previously, two similar frameworks for testing (conditional) independence were also proposed based on kernel tricks in machine learning \cite{gretton2007nips,zhang2011uai} and distance covariance/correlation \cite{szekely2007measuring,szekely2009brownian,wang2015conditional}.  Compared with these two frameworks, the framework based on CE is much sound theoretically due to the rigorous definition of CE and much efficient computationally due to the simple method for estimating CE. As shown in Figure \ref{f:3ci}, our method can estimate TE or CI much effectively compared with its competitors.

The CE based representation of TE \eqref{eq:three1} casts theoretical insights on how causality is measured by attaching each term with causal meaning. The joint CE can be interpreted as measuring all the causal effect on $Y$ from two time series; the self joint CE as measuring the effect of the past of $Y$ on the future $Y$ which corresponds to causal dynamical mechanism of $Y$; the association as the causality by the possible common causes which has effect on both the cause $X$ and the past of the effect $Y$. In a word, TE can be interpreted as the difference between all the effects on $Y$ and the effects of all the factors except $X$ on $Y$, i.e., the effect of only $X$ on $Y$. 

Compared with the previous works on estimating TE, the proposed method is more computationally efficient. The previous works based on definitions or partial copula, usually require estimating (conditional) CDF or PDF explicitly, which is computationally unstable, especially in the cases of small data and high dimensions. Our method is based the non-parametric CE estimation, which do not need to estimate (conditional) PDF and thus has good convergence guarantee. Similar to ours, the previous works on estimating TE with kNN method \cite{kraskov2004estimating} is also non-parametric method. The difference between us is that our method is supported by elegant proposition theoretically and therefore computationally efficient since it needs estimate three entropy terms compared with four entropy terms in those previous works. 

The application of TE requires time series to be stationary. In our experiment, the data used was collected from Beijing during April to May. Considering the weather condition of this period at Beijing, the stationary of the data can be assumed to be true. Another assumption by TE is Markovianity. In our experiments, only one past state (1 hour lag) is conditioned, which means assuming the Markovianity of meteorological processes at local scale (mesogamme scale) \cite{seaman2000meteorological}. Considering the distance between two locations\footnote{The straight-line distance from US Embassy to Beijing Capital International Airport is about 17.7km.} where the data were collected, we argue that this assumption is reasonable at this temporal and spatial scale in meteorological sense. To validate this assumption, the TE of meteorological factors conditioned on the past four hours lag are also estimated, as shown in Figure \ref{f:te4}. Comparison between Figure \ref{f:te} and Figure \ref{f:te4} shows that conditioning on more hours lag does not change TE too much and the trend of TEs remain same, which suggests that Markovianity is a reasonable assumption for the experiments. 

\begin{figure}
	\centering
	\subfigure[Dew Point]{\includegraphics[width=0.45\textwidth]{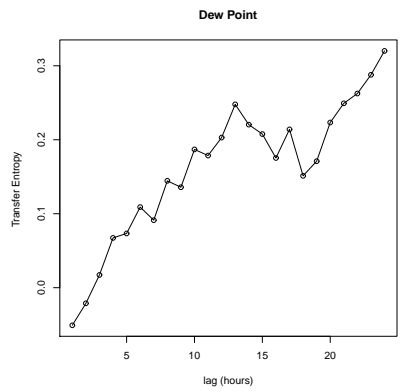}}	
	\subfigure[Temperature]{\includegraphics[width=0.45\textwidth]{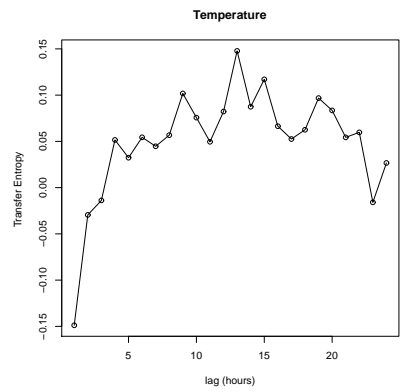}}
	\subfigure[Pressure]{\includegraphics[width=0.45\textwidth]{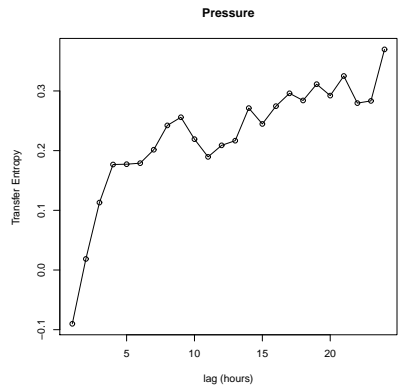}}
	\subfigure[Cumulated Wind Speed]{\includegraphics[width=0.45\textwidth]{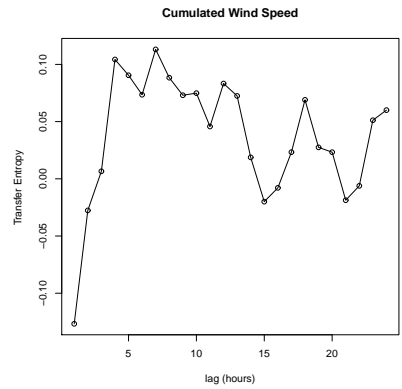}}	
	\caption{TE of different time lags of meteorological factors conditioned on past four hours states.}
	\label{f:te4}
\end{figure}

The experimental results show that the effects of meteorological factors on PM2.5 increase with roughly two phrases: the sharp increasing phrase in the first 9 hours time lag with its peak at about 9 hours lag and the flat increasing phrase during which TE of Dew point and pressure increase with relatively flat rate while TE of temperature and cumulated wind speed does increase any more. This phenomenon may means that the effects of meteorological factors on PM2.5 do not show immediately and is a cumulating meteorological process and that they affect the air quality of 9 hours later most. We conjecture that this corresponds to a underlying dynamical mechanism of PM2.5 generation.

The experimental results also show clearly how meteorological factors affects with each other. For example, It can be learned from Figure \ref{f:mfactors} that wind change temperature and pressure very quickly at Beijing. More specifically, wind changes temperature in 3 hours later and changes pressure in 5 hours later. Compared Figure \ref{f:ted} with Figure \ref{f:mfactorsc} and \ref{f:mfactorsd}, it can be learned that wind has causal effect on temperature and pressure hours more quickly than on PM2.5. This may be explained as how the entrained wind air is blended throughout the mixed layer and may help to build meteorological forecasting models for air quality \cite{seaman2000meteorological}.

The above experimental results help to understand the data with reasonable explanations with reference to meteorological knowledge. This means the proposed method can estimate TE effectively to infer causality relationships from observational data.

The results also show that association and causality are two different things. It can be learned from Figure \ref{f:factors} that even when the association between temperature and PM2.5 does not increase the TE of them still increases clearly. This suggests that only association (or correlation) is not suitable for investigate the causality relationship between temporal factors.

There are several research on analyzing air pollution data with causality tools. Dahlhaus and Eichler \cite{dahlhaus2003causality} tried to infer GC within air pollution data. However, instead of GC, the tool for measuring causal relationship is partial correlations on time series, which makes implicit Gaussian assumptions and infer linear relationship only. Zhu et al. \cite{zhu2016gaussian} applied TE to analyze the spatial-temporal causality relationships of air pollutants at different locations including Beijing. However, They estimated TE under also the Gaussian assumption, which is unreasonable both theoretically and empirically due to the non-linearity and non-Gaussianity of weather system \cite{sura2005multiplicative}. Compared with them, our estimation method makes no assumption on the underlying distributions and therefore derives more reliable results. Another related work by Kreuzer et al. \cite{kreuzer2019bayesian} applied copula to extend Gaussian state space model for predicting air pollution with the same dataset as ours. However, even introducing copula to state space model extends model flexibility for the issues of non-linearity and non-Gaussianity, state space model is still questionable for the dynamics of the underlying atmospheric system. For example, the meaning of the state variables is unexplainable in theory and unobservable in practice. Meanwhile, selecting parametric copula model brings the risk of model misspecification.

\section{Conclusion}
In this paper, we prove that TE can be represented with only CE and then propose a non-parametric method for estimating TE via CE which composed of only two simple steps. The proposed method was applied to analyze the Beijing PM2.5 data in the experiments. Experimental results show that the proposed method can identify causality relationships from data, discover how meteorological factors affects PM2.5 and each other, and hence help to understand the data better and to build better forecasting model for time series data. The experiments that compare the proposed method with other methods on testing CI show the advantage of our method.

\bibliographystyle{unsrt}
\bibliography{ci}

\end{document}